\newcommand{\Exp}{\mathbb{E}}
\newcommand{\R}{{\mathbb R}}
\newcommand{\cX}{{\cal X}}
\newcommand{\bx}{{\bf x}}
\newcommand{\bX}{{\bf X}}
\newcommand{\bmu}{{\boldsymbol \mu}}
\newcommand{\bSigma}{{\bf \Sigma}}
\newtheorem{Theorem}{Theorem}
\newtheorem{Lemma}{Lemma}
\newtheorem{Example}{Example}
\definecolor{MyDarkGreen}{rgb}{0.17,0.46,0.25} 
\definecolor{MyDarkRed}{rgb}{0.88,0.22,0.21} 
\definecolor{MyDarkBlue}{rgb}{0.11,0.11,0.70}
\definecolor{lightgray}{gray}{0.85}
\tikzset{>=stealth'} 
\tikzstyle{graphnode} = 
\tikzstyle{var}   =[graphnode,fill=white]
\tikzstyle{vardashed}   =[graphnode,draw=gray,fill=white]
\tikzstyle{obs}   =[graphnode,fill=black,text=white]
\tikzstyle{obsgrey}   =[graphnode,draw=white,fill=lightgray,text=black]
\tikzstyle{par}    =[graphnode,draw=white,fill=red,text=black] 
 \tikzstyle{crucial} =[graphnode,draw=white,fill=yellow,text=black] 
\tikzstyle{fac}   =[rectangle,draw=black,fill=black!25,minimum size=5pt]
\tikzstyle{facprior} =[rectangle,draw=black,fill=black,text=white,minimum size=5pt]
\tikzstyle{edge}  =[draw=white,double=black,very thick,-]
\tikzstyle{blueedge}  =[draw=white,double=blue,very thick,-]
\tikzstyle{rededge}  =[draw=white,double=red,very thick,-]
\tikzstyle{prior} =[rectangle, draw=black, fill=black, minimum size=
\tikzstyle{dirprior} = [circle, draw=black, fill=black, minimum
\tikzstyle{dot_node}=[draw=black,fill=black,shape=circle]
\date{25 November 2019}
\begin{document}
\frenchspacing

\title{{\bf Feature relevance quantification in explainable AI: \\A causal problem}} 

\author{Dominik Janzing, Lenon Minorics, and Patrick Bl\"obaum\\
{\small Amazon Research T\"ubingen, Germany }\\
{\small \{janzind, minorics, bloebp\}@amazon.com} }

\maketitle

\begin{abstract}
We discuss promising recent contributions on quantifying feature relevance using Shapley values, where we
observed some confusion on 
which probability distribution is the right one for dropped features. We argue that the confusion is based on
not carefully distinguishing between {\it observational} and {\it interventional} conditional probabilities and try a clarification based on Pearl's seminal work on causality.  
We conclude that {\it unconditional} rather than {\it conditional} expectations 
provide the right notion of {\it dropping} features in contradiction to the theoretical justification of the software package SHAP. Parts of SHAP are unaffected because
unconditional expectations (which we argue to be conceptually right) are used as {\it approximation} for the conditional ones, which encouraged others to `improve' SHAP in a way that we believe to be flawed. 
\end{abstract}

\section{Motivation} 
Despite several impressive success stories of deep learning, not only researchers in the field have been shocked more recently
about lack of robustness for algorithms that were actually believed to be powerful.  
Image classifiers, for instance, fail spectacularly once the images are subjected to adversarial changes that appear minor to humans, see e.g. \cite{Goodfellow2015, Sharif2016, Kurakin2017, Eykholt2018, Brown2018}.    
Understanding these failures is challenging since it is hard to analyze which features were decisive for the classification in a particular case.   
However, lack of robustness is only one of several different motivations for getting artificial intelligence {\it interpretable}. 
Also the demand for getting {\it fair} decisions, e.g., \citet{Dwork2012,Kilbertusetal2017,barocas2018}, requires understanding of algorithms.  
In this case, it may even be subject of legal and ethical discussions
{\it why} an algorithm came to a certain conclusion.

To formalize the problem, we describe the input / output behaviour as a function $f:\cX_1,\dots,\cX_n \rightarrow \R$ 
where $\cX_1,\dots,\cX_n$ denote the ranges of some input variables $(X_1,\dots,X_n)=:\bX$ (discrete or continuous), while we assume the target variable $Y$ to be
real valued for reasons that will become clear later. 
Given one particular input $\bx:=(x_1,\dots,x_n)$ we want to quantify to what extent each $x_j$ is `responsible' for the output $f(x_1,\dots,x_n)$.  
This question makes only sense, of course, after specifying what should one input be {\it instead}. Let us first 
consider the case where $\bx$ is compared to some `baseline' element $\bx'$, which has been studied in the literature mostly for the 
case of real-valued inputs and differentiable  $f$. Based on  a hypothetical scenario where only some of the baseline values $x'_j$ are replaced with $x_j$  
while others are kept one wants to quantify to what extent each component $j$ contributes to the difference $f(\bx)-f(\bx')$. 
The focus of the present paper, however, is a scenario where the baseline is defined by the expectation $\Exp[f(\bX)]$ over some distribution $P_\bX$.
To explain the relevance of each $j$ for the difference $f(\bx)-\Exp[f(\bX)]$ one considers a scenario where only some values are kept and
the remaining ones are {\it averaged over some probability distribution}. The main contribution of this paper is
to discuss which distribution is the right one. Recalling the difference between {\it interventional} and {\it observational} conditional distributions in the field of causality,
we explain why we disagree with the interesting proposal of \cite{Lundberg2017} in this regard.
Further we argue that our criticism is irrelevant for any software that `approximates' the conditional expectation
(which we consider conceptually wrong) by the unconditional expectation, as proposed by  \cite{Lundberg2017}.
%by  
%Further we argue that our criticism is irrelevant for those parts of their package SHAP where they `approximate' the conditional (which we consider conceptually wrong) 
%by a distribution that is conceptually right. 
The paper is structured as follows. Section~\ref{sec:axiom} summarizes  results from the literature regarding axioms for feature attribution 
for the case where there is a unique baseline reference input. Here integrated gradients and Shapley values (as the generalization to discrete input) are the unique 
attribution functions for the stated set of axioms. Section~\ref{sec:average} discusses the attribution problem for the case where one averages over unused features as in \cite{Lundberg2017}, and then we present our criticism. 
We think that the big overlap of the present paper with existing literature is justified by aiming at this clarification only, while keeping this clarification as self-consistent as possible. In particular, the very general discussion of \cite{Datta2016} contains all the ideas of this work at least implicitly, but since it appeared
before \cite{Lundberg2017} it could not explicitly discuss the conceptual problems raised by the latter. Our view on marginalization over unused features is supported by \cite{Datta2016} for similar reasons.  
In Section \ref{numerics} we present different experiments which illustrate our arguments.

\section{Prior Work \label{sec:axiom}}
The growth of deep neural networks recently motivated many researchers to investigate feature attribution,
see e.g. \cite{Shrikumar2016} for DeepLIFT, \cite{Binder2016} for Layer-wise Relevance Propagation (LRP), \cite{Ribeiro2016} for Local Interpretable Model-agnostic Explanations (LIME), and for gradient based methods \cite{Chattopadhyay2019}.  
For a summary of common architecture agnostic methods, see \cite{Molnar2019}. We first discuss two closely related concepts that
arise from an axiomatic approach.

\subsection{Integrated gradient \label{subsec:intgrad}}
\cite{Sundararajan2017}, investigated the attribution of $x_i$ to the difference
\begin{align}
f(\bx) - f(\bx'), \label{difference f(inp) - f(base)}
\end{align}
where $\bx'$ is a given baseline. 
Under the assumption that $f$ is 
differentiable almost everywhere\footnote{see \citet[Proposition 1]{Sundararajan2017}}, they defined the attribution of $x_i$ to 
\eqref{difference f(inp) - f(base)} as
\begin{align*}
\text{IntegratedGrads}_i&(\bx; f):= \\& \hspace{-1.2cm} (x_i - x_i') \int_{\alpha = 0}^1 \frac{\partial f(x' + \alpha (x-x'))}{\partial x_i} ~ d \alpha.
\end{align*}
Contrary to LIME, DeepLIFT and LRP, this attribution method has the advantage that all of the following $5$ properties are satisfied (see \cite{Sundararajan2017} and \cite{AAS2019}):
\begin{enumerate}
\item[\textit{1.}] \textit{Completeness:} If $atr_i(\bx ; f)$ denotes the attribution of $x_i$ to \eqref{difference f(inp) - f(base)}, then
\begin{align*}
\sum_{i} atr_i(\bx ; f) = f(\bx) - f(\bx').
\end{align*}
\item[\textit{2.}] \textit{Sensitivity:} If $f$ does not depend on $x_i$, then $atr_i(\bx;f) = 0$.
\item[\textit{3.}] \textit{Implementation Invariance:\footnote{Note that this axiom is pointless if it refers to properties of {\it functions} rather than properties of {\it algorithms}. We have listed it for completeness and for consistency with the literature.}} If $f$ and $f'$ are equal for all inputs, then
\begin{align*}
atr_i(\bx ; f) = atr_i(\bx; f') ~~~ \text{ for all } ~~ i.
\end{align*} 
\item[\textit{4.}] \textit{Linearity:} For $a,b \in \mathbb{R}$ holds
\begin{align*}
atr_i(\bx; af_1 + bf_2) &= \\a \cdot atr_i(\bx;& f_1) + b \cdot atr_i(\bx; f_2).
\end{align*}
\item[\textit{5.}] \textit{Symmetry-Preserving:} If $f$ is symmetric in component $i$ and $j$ and $x_i = x_j$ and $x_i' = x_j'$, then
\begin{align*}
atr_i(\bx; f) = atr_j(\bx; f).
\end{align*}
\end{enumerate}
Integrated gradients can be generalized by integrating over an arbitrary path $\gamma$ instead of the straight line. This attribution method is called \textit{path method} and the following theorem holds.
\begin{Theorem}{(\cite[Theorem 1]{FriedmanE2004} and \cite[Theorem 1]{Sundararajan2017})} \label{uniqueness theorem for attribution methods} If an attribution method satisfies the properties Completeness, Sensitivity, Implementation Invariance and Linearity, then the attribution method is a convex combination of path methods. Furthermore, integrated gradients is the only path method that is symmetry preserving.
\end{Theorem}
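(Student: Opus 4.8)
The plan is to establish the two assertions separately, since they rest on quite different ideas: the first is a structural representation result that I would reduce to Friedman's cost-sharing framework \cite{FriedmanE2004}, while the second is an explicit uniqueness argument driven by well-chosen test functions.

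\textbf{Part 1 (convex combination of path methods).} Recall that a path method is specified by a path $\gamma:[0,1]\to\R^n$ with $\gamma(0)=\bx'$ and $\gamma(1)=\bx$, through
\begin{align*}
atr_i^\gamma(\bx;f) = \int_0^1 \frac{\partial f(\gamma(\alpha))}{\partial x_i}\, \gamma_i'(\alpha)\, d\alpha .
\end{align*}
First I would note that Linearity makes $f \mapsto atr_i(\bx;f)$ a linear functional, so the whole method is encoded by a linear operator, and Implementation Invariance guarantees this operator depends only on $f$ as a function rather than on any representation, so one may work directly on function space. The crucial reformulation is to read the problem as an additive cost-sharing problem: $f$ is the cost function, $f(\bx)-f(\bx')$ the total cost, and $atr_i$ the share of feature $i$. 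Under this dictionary Completeness is exactly budget balance (efficiency) and Sensitivity is the dummy axiom. I would then invoke the representation theorem for such methods, which states that the additive, efficient, dummy-respecting cost-sharing methods form a convex set whose elements are precisely the mixtures — convex combinations, i.e. integrals against a probability measure on monotone paths — of path methods. Proving this representation is the technical core: one shows that the extreme points of the admissible convex set are exactly the path methods, essentially because Completeness forces the operator to act as integration of the conservative (gradient) field along a curve, while additivity together with the dummy property leaves no freedom beyond the choice of curve.

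\textbf{Part 2 (integrated gradients is the unique symmetry-preserving path method).} That integrated gradients is symmetry-preserving is immediate: its path is the straight line $\gamma(\alpha)=\bx'+\alpha(\bx-\bx')$, so whenever $x_i=x_j$ and $x_i'=x_j'$ one has $\gamma_i(\alpha)=\gamma_j(\alpha)$ and $\gamma_i'(\alpha)=\gamma_j'(\alpha)$ for all $\alpha$; combined with symmetry of $f$ in components $i,j$ (which gives $\partial_i f(\gamma(\alpha))=\partial_j f(\gamma(\alpha))$ at such points), the defining integrals for $atr_i$ and $atr_j$ have equal integrands and hence coincide. For uniqueness I would argue contrapositively: suppose a path method has a path $\gamma$ that, for some coordinate pair with $x_i=x_j$ and $x_i'=x_j'$, leaves the diagonal of the $(i,j)$-plane, so that $\gamma_i\neq\gamma_j$ on a set of positive measure. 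I would then exhibit a symmetric $f$ making $atr_i^\gamma\neq atr_j^\gamma$. A convenient family is $f(\bx)=\phi(x_i)\,\phi(x_j)$, for which the two attributions reduce to $\int_0^1 \phi'(\gamma_i)\,\phi(\gamma_j)\,\gamma_i'\,d\alpha$ and its $i\leftrightarrow j$ swap; letting $\phi$ range over a rich enough class forces $\gamma_i\equiv\gamma_j$ along the path. Applied to every coordinate pair and to endpoints lying on the diagonal, this pins the path to the diagonal, hence — up to the monotone reparametrizations under which the path integral is invariant — to the straight line, which is integrated gradients.

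\textbf{Main obstacle.} The hard part is the representation theorem underlying Part 1: showing that every admissible linear attribution operator is genuinely an average of path methods, rather than merely approximable by them, requires identifying the extreme points of an infinite-dimensional convex set of operators on function space together with a Choquet-type integral representation; this is exactly the content I would import from \cite{FriedmanE2004} rather than redo. For Part 2 the delicate point is purely technical, namely selecting a family of symmetric test functions expressive enough to separate $\gamma_i$ from $\gamma_j$ \emph{pointwise} and not merely in integrated form, since it is this upgrade that converts ``symmetry-preserving for all symmetric $f$'' into the pointwise identity $\gamma_i\equiv\gamma_j$ needed to conclude.
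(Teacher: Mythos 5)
This theorem is imported by the paper without proof --- it is quoted verbatim from \cite[Theorem 1]{FriedmanE2004} and \cite[Theorem 1]{Sundararajan2017} --- so the relevant comparison is with those cited proofs, and your plan follows essentially the same route. Part 1 is identical in structure: your axiom dictionary (Completeness $=$ efficiency/budget balance, Sensitivity $=$ dummy, Linearity $\Rightarrow$ additivity, Implementation Invariance vacuous at the level of functions, as the paper's own footnote observes) is exactly how the reduction to Friedman's cost-sharing representation goes, and you correctly locate the imported technical core (the extreme points of the admissible convex set are the monotone path methods, via a Choquet-type representation). Part 2 has the same strategy as Sundararajan et al.\ --- detect deviation from the diagonal with a symmetric test function --- though your tactic differs: they exhibit a single hand-built piecewise symmetric function for a given non-straight path, whereas you propose the family $f(\bx)=\phi(x_i)\phi(x_j)$. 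Your family does suffice, and the delicate step you flag has a clean fix you should make explicit: the attribution difference $D(\phi)$ is quadratic in $\phi$, so polarization upgrades your class to $\phi(x_i)\psi(x_j)+\psi(x_i)\phi(x_j)$, and localized bumps $\phi',\psi$ centered at values $a\neq b$ then separate $\gamma_i$ from $\gamma_j$ pointwise; monotonicity of admissible paths kills the swapped cross term, since a coordinatewise monotone path cannot visit both $(a,b)$ and $(b,a)$. One scope caveat, which your final sentence glosses over but which the original proof shares: the symmetry axiom only constrains endpoint pairs with $x_i=x_j$ and $x_i'=x_j'$, so the argument pins the path to the straight line only at such configurations; the uniqueness claim must be read per those endpoints (or for a uniformly prescribed path family), a limitation consonant with the paper's remark that convex combinations of path methods other than integrated gradients can still be symmetry preserving.
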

Notice that convex combinations of path methods can also be symmetry preserving even if the attribution method is not given by integrated gradients. \\

\subsection{Shapley values}
To assess feature relevance {\it relative to the average}, \cite{Lundberg2017}  use a concept that relies on first defining an attribution for binary functions,
or, equivalently, functions with subset as input ('set functions'). We first explain this concept and describe in Section~\ref{sec:average} how it solves the attribution relative to the expectation. 
Assume we are given a set with $n$ elements, say $U:=\{1,\dots,n\}$ and a function
\[
g: 2^U \to \R \quad  \hbox{ with }  g(U) \neq 0,\,  g(\emptyset)=0.
\]
We then ask to what extent each single $j\in U$ contributes to $g(U)$.
A priori, the contribution of each $j$ depends on the order in which more elements are included. We can thus define the contribution of $j$, given $T\subseteq U$
by
\[
C(j|T):= g(T \cup \{j\}) - g(T)
\]
(note that it can be negative and also exceed $g(U)$). With
\begin{align}
\phi_i :=    \sum_{T\subseteq U\setminus \{i\}}  \frac{1}{n {n-1 \choose |T|}}   C(i | T). \label{eq:shapley}
\end{align}
it then holds
\begin{align*}
g(U) = \sum_{i=1}^n    \phi_i .
\end{align*}

The quantity $\phi_i$ is called the {\it Shapley value} \citep{Shapley1953} of $i$, which can be considered the average contribution of  $i$ to $g(U)$.
At first glance, Shapley values only solve the attribution problem for binary inputs by canonically identifying subsets $T$ with binary words
$\{0,1\}^n$. To show that Shapley values also solve the above attribution problem, one can simply define a set function by 
\[
g(T):= f_T(\bx_T) - f(\bx'),
\]
for any subset $T\subseteq \{1,\dots,n\}$. Here, $f_T$ is the `simplified' function with the reduced input $\bx_T$ obtained from $f$ when 
all remaining features are taken from the baseline input $\bx'$, that is, $f_\emptyset (\bx_\emptyset) = f(\bx')$. 

Since Shapley Values also satisfy Completeness, Sensitivity, Implementation Invariance and Linearity \citep{AAS2019} with respect to
the binary function defined by the set function $g$, they are given by a convex combination of path methods. Furthermore, Shapley Values with respect to $g$ are Symmetry-perserving, but don't coincide with integrated gradients. \\

Different ways of feature attribution based on Shapley Values were recently investigated by \cite{Sundararajan2019}. Their main consideration is feature relevance relative to an auxiliary baseline, but feature attribution relative to the expectation (according to an arbitrary distribution) is also  mentioned. Furthermore, \cite{Sundararajan2019} already discussed that Shapley Values based on conditional distributions can assign unimportant features non-zero attribution. However, \cite{Sundararajan2019} didn't consider the problem from a causal perspective.

\section{How should we sample the {\it dropped} features? \label{sec:average}}

We now want to attribute the difference between $f(\bx)$ and the expectation $\Exp[f(\bX)]$ to individual features.  Explaining {\it why} the output 
for one particular input $\bx$ deviates strongly from the average output is particularly interesting for understanding `outliers'. 
Let us introduce some notation first. For any $T\subseteq U$ let 
$\Exp[f(\bx_T,\bX_{\bar{T}})|\bX_T =\bx_T]$ denote the conditional expectation of $f$, given  $\bX_T =\bx_T$. By $\Exp[f(\bx_T,\bX_{\bar{T}})]$ we denote the
expectation of $f(\bx_T,\bX_{\bar{T}})$ with respect to the distribution of $\bX_{\bar{T}}$ {\it without conditioning} on $\bX_T=\bx_T$. Let us call this expression
`marginal expectation' henceforth. 

Accordingly, we now discuss two different options for defining `simplified functions' $f_T$ where all features from $\bar{T}$ are dropped:
\begin{align}
f_T(\bx) &:=   \Exp[f(\bx_T,\bX_{\bar{T}})|\bX_T =\bx_T]    \label{eq:o1} \\
\hbox{ {\bf or} }  \quad f_T(\bx) & :=   \Exp[f(\bx_T,\bX_{\bar{T}})]  \quad \hbox{ {\bf ?} } \label{eq:o2}
\end{align}
 \cite{Lundberg2017} propose \eqref{eq:o1}, but since it is difficult to compute they {\it approximate} it by \eqref{eq:o2}, which they justify by the simplifying assumption
of feature independence. 
Using the set function $g(T):=f_T(\bx) - f_\emptyset (\bx)$, they compute Shapley values $\phi_i$ according to \eqref{eq:shapley}. 
%This is implemented in the software package SHAP, which comes with several notebooks. 
We will argue that using \eqref{eq:o2} rather than \eqref{eq:o1} is conceptually the right thing in the first place. Our clarification is  supposed 
to prevent others from `improving' SHAP by finding an approximation for the conditional expectation that is better than the marginal expectation, like, for instance \cite{AAS2019} and  \citep{Lundberg2018}\footnote{Note that TreeExplainer in SHAP has meanwhile been changed accordingly.}

 %because it is really based on
  % \eqref{eq:o1} rather than  \eqref{eq:o2}.

To explain our arguments, let us first explain why marginal expectations occur naturally in the field of causal inference.
\paragraph{Observational versus interventional conditional distributions}
\begin{comment}   
Already 
\cite{Chattopadhyay2019,Friedman2001,Zhao2019,Datta2016} argued in favor of using causal terminology to 
address feature relevance, but not in the context of Shapley values since \cite{Chattopadhyay2019} did not consider expectations and \cite{Friedman2001,Zhao2019} considered expectations $\Exp[f(\bx_T,\bX_{\bar{T}})]$ for specific $T$ rather than using symmetrization). 
\end{comment}
The main ideas of this paragraph can already be found in \cite{Datta2016} in more general and abstract form, see also \cite{Friedman2001} and \cite{Zhao2019}, but we want to rephrase them in a way that optimally prepares the reader to 
the below discussion. Assume we are given the causal structure shown in Figure~\ref{fig:toycs}.
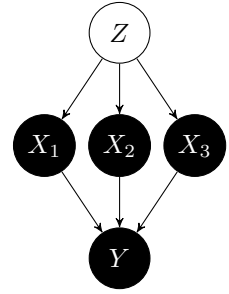
\begin{figure}[h]
   \centerline{ 
    \begin{tikzpicture}
        \node[var] at (2,2.5) (Z) {$Z$}; 
          \node[obs] at (1,1) (X_1) {$X_1$} edge[<-] (Z);
          \node[obs] at (2,1) (X_2) {$X_2$} edge[<-] (Z);
     \node[obs] at (3,1) (X_3) {$X_3$} edge[<-] (Z);
       \node[obs] at (2,-0.5) (Y) {$Y$} edge[<-] (X_1) edge[<-] (X_2) edge[<-] (X_3); 
  \end{tikzpicture}
}
\caption{\label{fig:toycs} A simple causal structure where the observational conditional $p(y|x_1)$ does not correctly describe how $Y$ changes after {\it intervening} on $X_1$ because the common cause $Z$ `confounds' the relation between $X_1$ and $Y$.}
\end{figure}
Further, assume we are interested in how the expectation of $Y$ changes when we manually set $X_1$ to some value $x_1$. This is {\it not} given by $\Exp[Y|X_1=x_1]$ 
because observing $X_1=x_1$ changes also the distribution of $X_2,X_3$ due to the dependences between $X_1$ and $X_2,X_3$ (which are generated by the
common cause $Z$). This way, the difference between $\Exp[Y]$ and $\Exp[Y|X_1=x_1]$  is not only due to the influence of $X_1$, but  can  also be caused by the influence of $X_2,X_3$. The impact of setting $X_1$ to $x_1$ is captured by Pearl's do-operator \cite{Pearl:00} instead, which yields 
\begin{align}
& \Exp[Y|do(X_1=x_1)]  \nonumber \\
= & \int  \Exp[Y|x_1, x_2, x_3] p(x_2,x_3) dx_2 dx_3 \label{eq:do1}.
\end{align}
This can be easily verified using the backdoor criterion \cite{Pearl:00} since (phrased in Pearl's language) the variables $X_2, X_3$ `block the backdoor path' $X_1 \leftarrow Z \rightarrow Y$.  Observations from $Z$ are not needed, we may therefore assume $Z$ to be latent, which we have indicated by white color. 

For our purpose, two observations are important: first,   \eqref{eq:do1} does not contain the conditional distribution, given $X_1=x_1$.
Replacing $p(x_2,x_3)$ with $p(x_2,x_3|x_1)$ in \eqref{eq:do1} would yield the {\it observational} conditional expectation 
$\Exp[Y|X_1=x_1]$, which we are not interested in. In other words, the intervention on $X_1$ breaks the dependences to $X_2,X_3$. 
The second observation that is crucial for us is that the dependences between $X_2,X_3$ are kept, they are unaffected by the intervention on $X_1$. 
\paragraph{Why observational conditionals are flawed} 
Let us start with a simple example.
\begin{Example}[irrelevant feature]
Assume we have
\[
f(x_1,x_2) = x_1.
\]
Obviously, the feature $X_2$ is irrelevant. 
Let both $X_1,X_2$ be binaries and 
\[
p(x_1,x_2) =  \left\{\begin{array}{cc} 1/2  &   \hbox{ for } x_1=x_2 \\ 0 & \hbox{ otherwise } \end{array} \right. .
\]
{\bf (1) with conditional expectations:}
\begin{eqnarray}
f_\emptyset (\bx) &=& \Exp[f(X_1,X_2)]  =  1/2  \label{eq:00}\\
f_{\{1\}}(\bx) &=&  \Exp[f(x_1,X_2)| x_1] = x_1\\
f_{\{2\}}(\bx)  &=& \Exp[f(X_1,x_2)|x_2]  = x_2  \label{eq:11}\\
f_{\{1,2\}}(\bx) &=& f(x_1,x_2) = x_1  
\end{eqnarray} 
\begin{comment}
\begin{align*}
\phi_2 &=  \sum_{T\subseteq U\setminus \{2\}}  \frac{1}{n {n-1 \choose |T|}}   C(i | T) \\[4pt]
& =  \frac{1}{2} \left( C(2|\emptyset)  + C(2|\{1\}) \right). 
\end{align*}
\end{comment}
Therefore, 
\begin{eqnarray*}
C(2|\emptyset) &=& f_{\{2\}}(\bx) - f_{\emptyset}(\bx) = x_1 - 1/2\\ 
C(2|\{1\})  &= & f_{\{1,2\}}(\bx) -  f_{\{1\}} (\bx) = x_1 - x_1.   
\end{eqnarray*}
Hence, the Shapley value for $X_2$ reads:
\begin{align*}
\phi_2  = \frac{1}{2} \left( x_1-1/2 +x_1 -x_1 \right) 
= x_1/2 - 1/4 \neq 0.  
\end{align*}

{\bf (2) with marginal expectations:}
\begin{eqnarray}
f_\emptyset(\bx) &=& \Exp[f(X_1,X_2)]  =  1/2 \label{eq:00new}\\
f_{\{1\}}(\bx) &=&  \Exp[f(x_1,X_2)] = x_1 \\
f_{\{2\}}(\bx) &=& \Exp[f(X_1,x_2)]  = 1/2\\
f_{\{1,2\}}(\bx) &=& f(x_1,x_2) = x_1 \label{eq:11new}.
\end{eqnarray} 
We then obtain
\begin{eqnarray*}
C(2|\emptyset) &=& f_{\{2\}}(\bx) -f_{\emptyset}(\bx)  = 0\\ 
C(2|\{1\})  &= & f_{\{1,2\}}(\bx) - f_{\{1\}}(\bx) = 0,   
\end{eqnarray*}
which yields $\phi_2=0$.
\end{Example}
The example proves the follow result, which were already discussed in \cite{Sundararajan2019}:
\begin{Lemma}[failure of Sensitivity]\label{lem:main}
When the relevance of $\phi_i$ is defined by defining `simplified' functions $f_T$ via conditional expectations 
\[
f_T(\bx_T):= \Exp[f(\bx) | \bX_{\bar{T}} =\bx_{\bar{T}}],
\]
then $\phi_i\neq 0$ does not imply that $f$ depends on $x_i$.
\end{Lemma}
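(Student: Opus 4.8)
The plan is to prove the lemma by counterexample, which is the natural route for a non-implication of this kind: it suffices to exhibit a single function $f$ that genuinely does not depend on one of its arguments $x_i$, yet whose conditional-expectation Shapley value $\phi_i$ is nonzero. Example~1 already provides exactly such an instance, so the proof reduces to verifying that its computation is correct and that the resulting value is truly nonzero.

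Concretely, I would fix $n=2$ and $f(x_1,x_2)=x_1$, which is visibly independent of its second argument, so that Sensitivity (axiom~2 of Section~\ref{subsec:intgrad}) would demand $\phi_2=0$. I would then pick the joint law coupling the two binary features deterministically, $p(x_1,x_2)=1/2$ for $x_1=x_2$ and $0$ otherwise. The one step that drives the whole argument is the evaluation of the simplified function in which only the second feature is retained: since the distribution forces $X_1=X_2$, the conditional expectation $\Exp[f(X_1,x_2)\mid X_2=x_2]=\Exp[X_1\mid X_2=x_2]=x_2$ still varies with $x_2$, and on the support (where $x_1=x_2$) it equals $x_1$. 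This is the crux, and the only place where the deterministic correlation does real work; the remaining three values computed in part~(1) of Example~1, namely $f_\emptyset(\bx)=1/2$, $f_{\{1\}}(\bx)=x_1$ and $f_{\{1,2\}}(\bx)=x_1$, are immediate.

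From here I would assemble the two marginal contributions $C(2\mid\emptyset)=f_{\{2\}}(\bx)-f_\emptyset(\bx)=x_1-1/2$ and $C(2\mid\{1\})=f_{\{1,2\}}(\bx)-f_{\{1\}}(\bx)=0$, and substitute them into the Shapley formula \eqref{eq:shapley}. For $n=2$ both combinatorial weights equal $1/2$, so $\phi_2=\tfrac12\bigl(x_1-\tfrac12\bigr)=x_1/2-1/4$, which equals $-1/4$ at $x_1=0$ and $1/4$ at $x_1=1$. In particular $\phi_2\neq0$ although $f$ does not depend on $x_2$, and a single such counterexample refutes the implication, establishing the lemma.

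I do not expect any genuine obstacle: the entire content lies in noticing that conditioning on the retained feature reintroduces, through the correlation between the features, precisely the dependence that the marginal expectation of option~\eqref{eq:o2} would have integrated away. Indeed, re-running the identical computation with marginal rather than conditional expectations yields $f_{\{2\}}(\bx)=1/2$ and hence $\phi_2=0$, which simultaneously confirms that the failure is specific to the conditional-expectation convention.
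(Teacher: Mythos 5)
Your proposal is correct and follows essentially the same route as the paper, which proves the lemma through exactly this counterexample ($f(x_1,x_2)=x_1$ with perfectly correlated binary features, yielding $\phi_2=x_1/2-1/4\neq 0$ under conditional expectations). Your additional check that the identical computation with marginal expectations gives $\phi_2=0$ matches part~(2) of the paper's example as well.
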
 
The example is particularly worrisome because we mentioned earlier that Shapley values satisfy the axiom of sensitivity, while
Lemma~\ref{lem:main} seems to claim the opposite. 
The resolve this paradox, note that the Shapley values refer to binary functions (or set functions) and
reading \eqref{eq:00} to \eqref{eq:11} as the values of a binary function $\tilde{g}$ with inputs $(z_1,z_2) = 00,10,01,11$ we clearly observe that $\tilde{g}$ depends also on the second bit. This way, the Shapley values do not violate sensitivity for $\tilde{g}$, but we certainly care about `sensitivity for $f$'.
Note that this distinction between the binary function $\tilde{g}$ and $f$ is crucial although in our example $f$ is binary itself. 
Fortunately, the second bit is irrelevant for the binary function $\tilde{g}$  defined by \eqref{eq:00new} and \eqref{eq:11new} and we do not obtain the above paradox. 
 
To assess the impact of changing the inputs of $f$, we now switch to a more causal language and state that we consider the inputs of an algorithm as
{\it causes} of the output.  Although this remark seems trivial it is necessary to emphasize that we are not talking about the causal relation between 
any features in the real world outside the computer (where the attribute predicted by $Y$ may be the cause of the features), but only about causality of this technical input / output system\footnote{Accordingly, $Y$ is the output of the system and not a property of the external world.}. 
To facilitate this view, we formally distinguish between the true features $\tilde{X}_1,\dots,\tilde{X}_n$ 
obtained from the objects and the corresponding features $X_1,\dots,X_n$ plugged into the algorithm. This way, we are able to talk about a hypothetical scenario where the inputs are changed compared to the true features. Let us first consider the causal structure in figure~\ref{fig:cs}, top, where the inputs are determined by the true features. In contrast, figure~\ref{fig:cs}, bottom, shows the causal structure after an intervention on $X_1,X_2$ has adjusted these variables to fixed values $x_1,x_2$. 

We now consider the impact of an hypothetical intervention, which leaves the remaining components  {\it unaffected}. They are therefore sampled from their natural joint distribution {\it without} conditioning.
Similar to the above paragraph, we then obtain 
\begin{align}\label{eq:do}
\Exp [ Y | do(\bX_T =\bx_T) ] = \Exp[f(\bx_T,\bX_{\bar{T}})]. 
\end{align}
Our formal separation between the {\it true} values of the features $\tilde{X}_j$ of some object and the corresponding  {\it inputs} $X_j$ of the algorithms
allows us to be agnostic about the causal relations between the true features in the real world, the fact that
 the inputs $X_1,\dots,X_n$ {\it cause} the output $Y$ is the only causal knowledge needed to compute
\eqref{eq:do}. Since the interventional expectations coincide with the marginal expectations, we have thus justified the use of marginal expectations for the Shapley values from the causal perspective. 
\begin{figure}[H]
   \centerline{ 
    \begin{tikzpicture}[scale=0.9]
       \node[var] at (0,4) (X1) {$\tilde{X}_1$} ;
    \node[var] at (1,4) (X2) {$\tilde{X}_2$} ;
    \node[var] at (2,4) (X3) {$\tilde{X}_3$} ;
     \node[var] at (3,4) (X4) {$\tilde{X}_4$} ;
      \node[var] at (4,4) (X5) {$\tilde{X}_5$} ; 
    \node[obs] at (0,2) (X_1) {$X_1$} edge[<-] (X1);
    \node[obs] at (1,2) (X_2) {$X_2$} edge[<-] (X2) ;
    \node[obs] at (2,2) (X_3) {$X_3$} edge[<-] (X3);
     \node[obs] at (3,2) (X_4) {$X_4$} edge[<-] (X4);
      \node[obs] at (4,2) (X_5) {$X_5$} edge[<-] (X5);
       \node[obs] at (2,0) (Y) {$Y$} edge[<-] (X_1) edge[<-] (X_2) edge[<-] (X_3) edge[<-] (X_4) edge[<-] (X_5); 
       \draw[draw=black] (-1,3.25) rectangle ++(6,2);
       \node[anchor=center] at (2,4.7) {object with features};
  \end{tikzpicture}
  }
  \vspace{1cm}
    \centerline{ 
    \begin{tikzpicture}[scale=0.9]
       \node[var] at (0,4) (X1) {$\tilde{X}_1$} ;
    \node[var] at (1,4) (X2) {$\tilde{X}_2$} ;
    \node[var] at (2,4) (X3) {$\tilde{X}_3$} ;
     \node[var] at (3,4) (X4) {$\tilde{X}_4$} ;
      \node[var] at (4,4) (X5) {$\tilde{X}_5$} ; 
    \node[obs] at (0,2) (X_1) {$x_1$};
    \node[obs] at (1,2) (X_2) {$x_2$};
    \node[obs] at (2,2) (X_3) {$X_3$} edge[<-] (X3);
     \node[obs] at (3,2) (X_4) {$X_4$} edge[<-] (X4);
      \node[obs] at (4,2) (X_5) {$X_5$} edge[<-] (X5);
       \node[obs] at (2,0) (Y) {$Y$} edge[<-] (X_1) edge[<-] (X_2) edge[<-] (X_3) edge[<-] (X_4) edge[<-] (X_5); 
       \draw[draw=black] (-1,3.25) rectangle ++(6,2);
       \node[anchor=center] at (2,4.7) {object with features};
  \end{tikzpicture}  
  }
\caption{\label{fig:cs} Top: Causal structure of our prediction scenario: The output $Y$ is determined by the inputs $X_1,\dots,X_n$. In the usual 
learning scenario these inputs coincide with features $\tilde{X}_1,\dots,\tilde{X}_n$ ob some object, that is $X_j=\tilde{X}_j$. 
Bottom: To evaluate the impact of some inputs, say $X_1,X_2$, for the output $Y$ we consider a hypothetical scenario where we adjust these inputs 
to some fixed values $x_1,x_2$ and sample the remaining inputs from the usual joint distribution $P_{X_3,\dots,X_n}$.}
\end{figure}
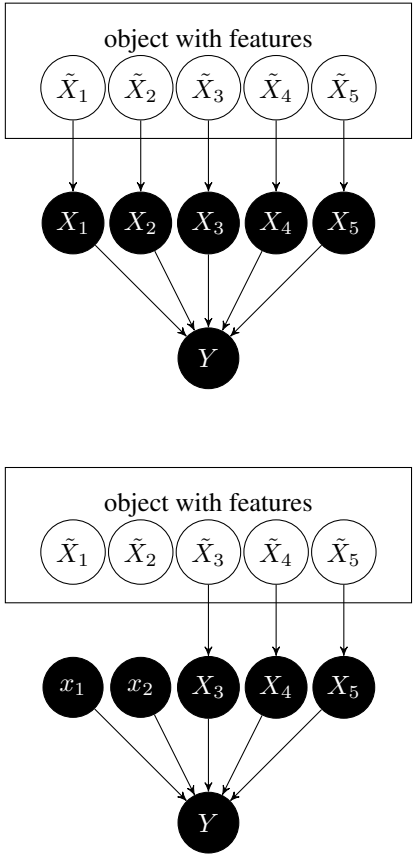
\paragraph{The problem with the symmetry axiom}
We briefly rephrase Example 4.9 of \cite{Sundararajan2019} showing that the symmetry axiom is violated when Shapley values are used for quantifying 
the influence relative to conditional or marginal expectations.
 Figure \ref{fig:lack of symmetry} shows values and probabilities of two random variables $X_1$ and $X_2$ and the values of the function $f(X_1,X_2) =  X_1+X_2$. As explained by \cite{Sundararajan2019}, for the input $(x_1,x_2) = (2,2)$ the value $x_1$ gets attribution $(1-p)$ and $x_2$ gets attribution $(1-q)$.  Therefore, if $p \neq q$, $x_1$ and $x_2$ get different attribution, although $f$ is symmetric. They conclude that this is a violation of symmetry.
Since $X_1$ and $X_2$ are independent, this problem occurs regardless of whether one defines the simplified function $f_T$ with respect
to marginal or conditional expectations. 
One can argue, however, that this result makes intuitively sense because the value $x_j$ that is farther from its mean contributes {\it more} to the 
fact that $f(x_1,x_2)$ deviates from its mean. If we have even $x_1=\Exp[X_1]$, we would certainly say that $x_1$ does not contribute to the deviation from the mean at all. For this reason we do not follow  \cite{Sundararajan2019} in regarding this phenomenon as a problem of this kind of attribution analysis. 
Recall furthermore that we have already mentioned that the symmetry axiom does hold for the corresponding binary function defined by including or not certain features
(simply because symmetry holds for Shapley values). For the above example this binary function is
indeed asymmetric. To check this, define 
\[
\tilde{g}(z_1,z_2) := \Exp [f(\bx_T,\bX_{\bar{T}})],
\]
where $T$ is the set of all $j$ for which $z_j=1$. This function is not symmetric in $Z_1$ and $Z_2$, since we have, for instance,
$\tilde{g}(1,0)= x_1 + \Exp[X_2] \neq \tilde{g}(0,1)=x_2 + \Exp[X_1]$.

\begin{figure}
\begin{tabular}{l l l l}
\hline Probability & $X_1$ & $X_2$ & $f = X_1 + X_2$ \\[2pt] \hline 
$(1-p) \cdot (1-q)$ & 1 & 1  & 2 \\
$(1-p) \cdot q$ & 1 & 2 & 3 \\
$(1-q) \cdot p$ & 2 & 1 & 3 \\
$~p \cdot q$ & 2 & 2 & 4 \\ \hline

\end{tabular}
\caption{\label{fig:lack of symmetry} Table 3 from \cite{Sundararajan2019} which shows an example for alleged lack of symmetry of Shapley Values with respect to the marginal expectation.}
\end{figure}
\begin{comment}

\appendix
\section{Mathematical Foundation}
\begin{Theorem}\label{Appendix extension theorem}
Let $h: \{0,1\}^n \longrightarrow \mathbb{R}$. Then, $h$ can be extended differentiable onto $[0,1]^n$.
\end{Theorem}
\begin{proof}
For each point $x \in \{0,1\}^n$ define a ball $B_i(x)$ with center $x$ such that all balls are disjoint. For each of these disjoint balls define an analytic function $\hat{h}_i$ with $\hat{h}_i(x) = h(x), x \in \{0,1\}^n$ and let $\hat{h}$ be the union of the $\hat{h}_i$. Then, $\hat{h}$ satisfies the conditions of Whitneys Extension Theorem \cite[Theorem 1]{Whitney1934} and hence the claim follows.
\end{proof}
\end{comment}

\section{Numerical Evidence} \label{numerics}
In this section, we show numerically that the marginal expectation $\mathbb{E}[f(\bx_T,\bX_{\bar{T}})]$ is a better choice than $\mathbb{E}[f(\bx_T,\bX_{\bar{T}})|\bX_T = \bx_T]$ to quantify the attribution of each observation $x_j$ of a particular input $\bx = (x_1,\dots, x_n)$ to $f(\bx)- \mathbb{E}f(\bX)$. 

\subsection{Computation of Shapley Values}
As explained by \citet[Section 2.3]{AAS2019}, the implementation of KernelSHAP \citep{Lundberg2017} consists of two parts:
\begin{enumerate}
\item
Using a representation of Shapley Values as the solution of a weighted least square problem for a computationally tractable approximation.
\item
Approximation of $g(T)$.
\end{enumerate}
\subsubsection{Shapley Values as solution of weighted least square problem}
By \cite{Charnes1988}, the Shapley Values to the set function $g$ are given as the solution $(\phi_1, \dots, \phi_n)$ of
\begin{align}
\min_{\phi_1, \dots, \phi_n} \left\{ \sum_{T \subseteq U} \Big[g(T) - \Big(\sum_{j \in T} \phi_j \Big)\Big]^2 k(U,T) \right\}, \label{minimation problem WLS}
\end{align}
where $k(U,T)=(|U| - 1)/(\binom{|U|}{|T|}|T|(|U|-|T|))$ are the \textit{Shapley kernel weights}. Since $k(U,U) = \infty$, we use the constraint $\sum_{j} \phi_j = g(U)$, or, for numerical calculation, we set $k(U,U)$ to a large number.

Since the power set of $U$ consists of $2^n$ elements, the computation time of the Sharpley Values increases exponentially. KernelSHAP therefore samples subsets of $U$ according to the probability distribution induced by the Shapley kernel weights. 
\subsubsection{Approximation of the set function}
As discussed in the previous sections, \cite{Lundberg2017} define
\begin{align*}
f_T(\bx) = \mathbb{E}[f(\bx_T,\bX_{\bar{T}})|\bX_T = \bx_T].
\end{align*}
To evaluate the conditional expectation, they assume feature independence (or weak dependence) to obtain $\mathbb{E}[f(\bx_T,\bX_{\bar{T}})|\bX_T = \bx_T] \approx\mathbb{E}[f(\bx_T,\bX_{\bar{T}})]$ and use the approximation 
\begin{align}
f_{T,\text{KernelSHAP}}(\bx) \approx \frac{1}{K}\sum_k f(\bx_T,\bx_{\bar{T}}^k), \label{KernelSHAP approx}
\end{align}
where $\bx_{\bar{T}}^k$, $k = 1,\dots,K$ are our samples from $\bX_{\bar{T}}$. 
\subsection{Experiments}
To show in an experimental setup that the marginal expectation is a better choice, we consider functions $f$ for which we can calculate analytically the attribution of $x_j$. This is possible for linear functions
\begin{align*}
f(\bx) =  \alpha_0 + \sum_i \alpha_i x_i, ~~~ \alpha_i \in \mathbb{R} 
\end{align*}
since
\begin{align*}
f(\bx) - \mathbb{E}[f(\bX)] = \sum_i \alpha_i (x_i - \mathbb{E}X_i)
\end{align*}
 and hence, the attribution of $x_j$ is  $\alpha_j (x_j - \mathbb{E}[X_j])$.
Our experiments are divided into the following setups:
\begin{enumerate}
\item
We assume that the feature vector $\bX$ follows a multivariate Gaussian distribution.
\item
We use a kernel estimation to approximate the conditional expectation.
\end{enumerate}
For the experiments, we use the KernelExplainer class of the python SHAP package from \citet{Lundberg2017} to calculate Shapley Values with respect to the marginal expectation and the R package SHAPR, in which the methodology of \citet{AAS2019} is implemented, to calculate Shapley Values with respect to the conditional distribution. 

Notice that calculating Shapley Values is also possible for non-linear functions. Further, approximating the marginal expectation is computationally inexpensive compared to the approximation of the conditional expectation with kernel estimation.
\subsubsection{Multivariate Gaussian distribution}
If $\bX \sim N(\bmu,  \bSigma)$ with some mean vector $ \bmu$ and covariance matrix $\bSigma$ , it holds that 
\[
\mathbb{P}(\bX_{\bar{T}}|\bX_T = \bx_T) = N(\bmu_{ \bar{T}|T},\bSigma_{\bar{T}|T})
\]
(see \cite[Section 3.1]{AAS2019}), where
\begin{align*}
\boldsymbol\mu_{\bar{T}|T} &= \boldsymbol\mu_{ \bar{T}} + \bSigma_{\bar{T}T}\bSigma_{T T}^{-1} (\bx_T -  \bmu_T) \\
\bSigma_{\bar{T}|T} &= \bSigma_{\bar{T} \bar{T}} - \bSigma_{\bar{T} T} \bSigma_{T T}^{-1} \bSigma_{T\bar{T}}
\end{align*}
with
\begin{align*}
\bmu =  \left(\begin{array}{c}
      \bmu_T \\
      \bmu_{\bar{T}}
    \end{array}\right), ~~~ \bSigma = \begin{pmatrix}
  \bSigma_{T T} & \bSigma_{T\bar{T}} \\
  \bSigma_{\bar{T}T} & \bSigma_{\bar{T}\bar{T}} \\
 \end{pmatrix}. 
\end{align*}
Hence, we can approximate the conditional expectation by sampling $X_{\bar{T}}$ directly from its distribution.

We simulate Gaussian data and run the experiment for different number of features. For every experiment with multivariate Gaussian distribution, we set the intercept to 0, i.e. $\alpha_0 = 0$.  \\[5pt]
\textbf{Dimension n=3.} In the first 3-dimensional experiment, we let $\alpha_1 = 0$ and choose in every run $\alpha_1$ and $\alpha_2$ independently from the standard normal distribution. Further, we let $\bmu = (0,0,0)^T$ and $\bSigma = c c^T$, where we choose the entries of  $c$ in every run independently from the standard normal distribution and $\bx$ also randomly in every run. The number of runs and the sample size of $\bX$ is 1000. Figure \ref{fig:histogramm3dim} shows the errors $ \phi_j-\text{contr}_j(\bx) $ of the Shapley Values $\phi_j$ with respect to the set function $g(T) = \mathbb{E}[f(\bx_T,\bX_{\bar{T}})] - \mathbb{E}f(\bX)$ (blue) and the set function $g(T) = \mathbb{E}[f(\bx_T,\bX_{\bar{T}})|\bX_T = \bx_T] - \mathbb{E}f(\bX)$ (red). The very precise results for the marginal expectation are mainly from feature 1.
\begin{figure}
\includegraphics[width=0.23\textwidth, height=70px]{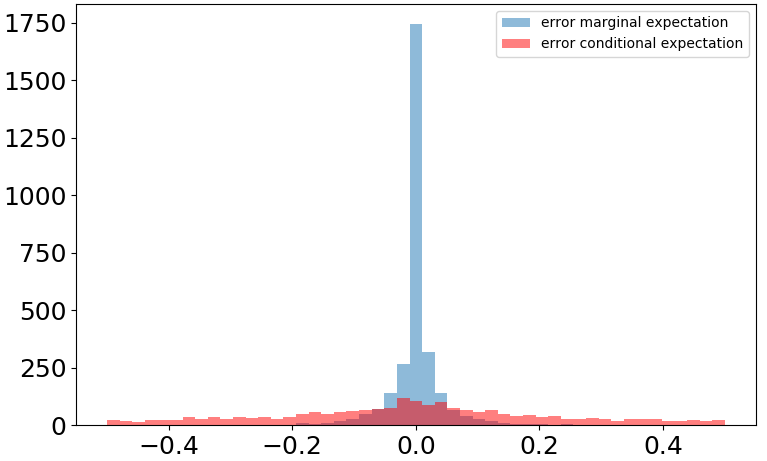}
\includegraphics[width=0.23\textwidth, height=70px]{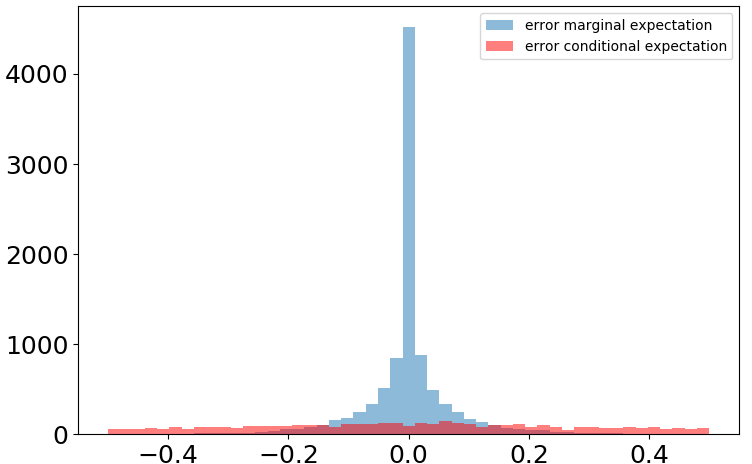}
\caption{\label{fig:histogramm3dim} Histogram showing the error of the Shapley Values for multivariate Gaussian distribution in the 3-dimensional (left) and 10-dimensional (right) setting with $\alpha_1 = 0$. Blue: error using marginal expectation, Red: error using conditional expectation.}
\end{figure}
\\[5pt]
\textbf{Dimension n=10.} In 10-dimensions, we take almost the same setting with the difference that we set the first 3 coefficients to zero, i.e. $\alpha_1=\alpha_2=\alpha_3 = 0$. Again, the very precise results for the marginal expectation are from the features whose coefficient we set to 0.

\subsubsection{Approximation via kernel estimation}
If we have no information about the underlying distribution, it is hard to approximate the conditional distribution sufficiently. However, in low dimensions kernel estimates can provide a good approximation. We take the kernel estimation method from \citet{AAS2019} to show how strongly the Shapley Values w.r.t. conditional expectation deviate from $\alpha_j (x_j - \mathbb{E}[X_j])$. Their approximation is as follows:

\begin{enumerate}
\item
Let $\Sigma_T$ be the covariance matrix of our sample from $\bX_T$. To each point $\bx^i$ of the sample, calculate the Mahalanobis distance (see \cite{Mahalanobis1936})
\begin{align*}
\text{dist}_T(\bx, & \bx^i) := \\ &\sqrt{\dfrac{(\bx_T - \bx^i_T)' \Sigma_T^{-1} (\bx_T-\bx^i_T)}{|T|}},
\end{align*}
where $(\bx_T - \bx^i_T)'$ denotes the transpose of $(\bx_T - \bx^i_T)$.

\item
Calculate the \textit{Kernel weights}
\begin{align*}
w_T(\bx,\bx^i) := \exp\left(- \frac{\text{dist}_T(\bx,\bx^i)^2}{2 \sigma^2}\right).
\end{align*}
Hereby, $\sigma^2 > 0$ is a bandwidth which has to be specified.

\item
Sort the weights $w_T(\bx,\bx^i)$ in increasing order and let $\tilde{\bx}^i$ be the corresponding ordered sampling instances. Then, approximate $g(T)$ by
\begin{align*}
g_{\text{cond}}(T) :=\frac{\sum_{i=1}^K w_T(\bx, \tilde{\bx}^i)f(\bx_{\bar{T}}^i,\bx_T)}{\sum_{i=1}^K w_T(\bx, \tilde{\bx}^i)}.
\end{align*} 
\end{enumerate}

For the experiment, we use the real data set \textit{Human Activity Recognition Using Smartphones Data Set} (see \cite{Anguita2013}) from the UCI repository. The data set consists of 561 features with a training sample size of 7352 and test sample size of  2948. In this experiment, we merge these two samples together and therefore our sample size is 10299. We take randomly 4 features and train a linear model with 3 of these features as inputs and with the 4-th feature as target. We don't consider the label (which is a daily activity performed by the human) of the data set, but the different features have the true label as a common cause. Notice that we are not interested in the quality of the model, but rather in a model for which the ground truth of the attribution is known (because we can certainly look at the linear model obtained). 

Afterwards, we calculate the Shapley Values with SHAP and SHAPR (with $\sigma^2$ set to 0.1 in SHAPR which is the default value) using the first 1000 samples and approximate the expected value $\mathbb{E}X_j$ using the whole data set. The observation $\bx$ is also randomly picked from the data and we run this experiment 1000 times. Figure \ref{fig:histogrammRealData} shows the histogram of the error $\phi_j - \text{contr}_j(\bx)$ for the marginal expectation (blue) and conditional expectation (red). 
\begin{figure}[h]
\centerline{
\includegraphics[width=0.24\textwidth, height=70px]{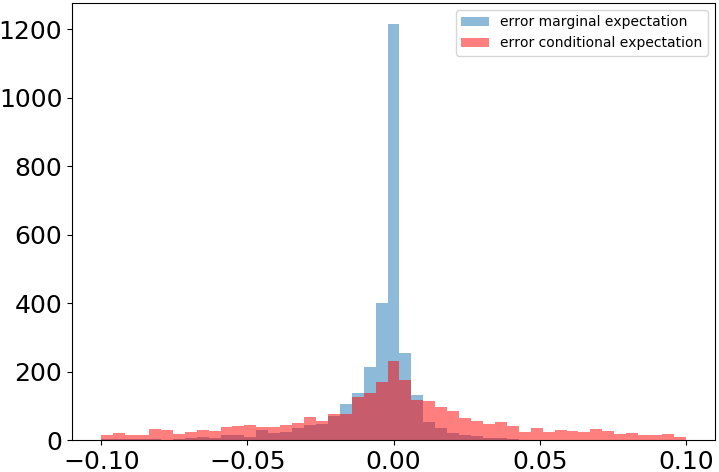}
}
\caption{\label{fig:histogrammRealData} Histogram showing the error of the Shapley Values for the data set \textit{Human Activity Recognition Using Smartphones Data Set}. Blue: error using marginal expectation, Red: error using conditional expectation.}
\end{figure}

\section{Conclusion}
In this work we considered the problem of attributing the output from one particular multivariate input to individual features. We argued that there is a misconception also in recent proposals for feature attribution 
because they use observational conditional distributions rather than interventional distributions. 
Our arguments are phrased in terms of the causal language introduced by \cite{Pearl:00}. 
We argue that parts of the package SHAP from  \citet{Lundberg2017} are unaffected by this misconception (although the corresponding theory part of the paper suffers from this issue) since they `approximates' the observational expectations by an expression that would have been the right one in the first place.  
We think that this clarification is important since other authors tried to `improve' the SHAP package 
in a way that we consider conceptually flawed.
Moreover, we revisited some properties that were stated as desirable in the context of attribution analysis.
If stated in a too vague manner, there is some room for interpretation. We argued, for instance, why we think that our attribution method satisfies a reasonable symmetry property, since attribution via interventional probabilities has been
criticised  for violating alleged desirable symmetry properties. 

\noindent
{\bf Acknowledgements:} The authors would like to thank Scott Lundberg and Anders L{\o}land for their valuable feedback
and Atalanti Mastakouri for remarks on the presentation.

%\bibliographystyle{plainnat}
%\bibliography{../../../literature/literatur}
																					
\end{document}